\newcommand{\ptar}{p_{\textsc{tar}}}
\newcommand{\bptar}{\overline{p}_{\textsc{tar}}}
\newcommand{\paux}{p_{\textsc{aux}}}
\newcommand{\pg}{p_\textsc{g}}
\newcommand{\xx}{{\mathbf{x}}}
\newcommand{\ww}{{\mathbf{w}}}
\newcommand{\zz}{{\mathbf{z}}}
\newtheorem{assumption}{Assumption}
\newtheorem{proposition}{Proposition}
\newcommand{\Lvmi}{\mathcal{L}_\textsc{vmi}^\gamma}
\newcommand{\Lsvmi}{\mathcal{L}_\textsc{s-vmi}^\gamma}
\newcommand{\Lvmig}[1]{\mathcal{L}_\textsc{vmi}^{#1}}
\newcommand{\vmi}[1]{\textsc{vmi}}
\def\eps{\boldsymbol{\epsilon}}
\newcommand{\aux}{{\textsc{aux}}}
\newcommand{\E}{\mathbb{E}}
\newcommand{\KL}{D_{\mathrm{KL}}}
\DeclareMathOperator*{\argmax}{arg\,max}
\DeclareMathOperator*{\argmin}{arg\,min}
\newif\ifshowplaceholder
\title{Variational Model Inversion Attacks}
\author{
Kuan-Chieh Wang$^{1,2,\dagger}$ \hspace{2em}
Yan Fu$^{1}$ \hspace{2em}
Ke Li$^{3}$  \hspace{2em}
Ashish Khisti$^{1}$ \\
\textbf{Richard Zemel}$^{1,2}$ \hspace{2em}
\textbf{Alireza Makhzani}$^{1,2,\mathsection}$\\
  University of Toronto$^1$, Vector Institute$^2$, Simon Fraser University$^3$\\
  $^\dagger$\texttt{wangkua1@cs.toronto.edu}, \hspace{1em} $^\mathsection$\texttt{makhzani@vectorinstitute.ai} \\
}
\begin{document}

\maketitle

\begin{abstract}
Given the ubiquity of deep neural networks, it is important that these models do not reveal information about sensitive data that they have been trained on. In model inversion attacks, a malicious user attempts to recover the private dataset used to train a supervised neural network. A successful model inversion attack should generate realistic and diverse samples that accurately describe each of the classes in the private dataset. In this work, we provide a probabilistic interpretation of model inversion attacks, and formulate a variational objective that accounts for both diversity and accuracy. In order to optimize this variational objective, we choose a variational family defined in the code space of a deep generative model, trained on a public auxiliary dataset that shares some structural similarity with the target dataset.  Empirically, our method substantially improves performance in terms of target attack accuracy, sample realism, and diversity on datasets of faces and chest X-ray images.

\end{abstract}

\section{Introduction}
\label{sec:introduction}

Thanks to recent advances, deep neural networks are now widely used in applications including facial recognition~\citep{parkhi2015deep}, intelligent automated assistants, and personalized medicine~\citep{international2009estimation,sconce2005impact}.
Powerful deep neural networks are trained on large datasets that could contain sensitive information about individuals.
Publishers release only the trained models (e.g., on TensorFlow Hub~\citep{tfhub}), but not the original dataset to protect the privacy of individuals whose information is in the training set.
Regrettably, the published model can leak information about the original training set~\citep{geiping2020inverting,nasr2019comprehensive,shokri2017membership}.
Attacks that exploit such properties of machine learning models are privacy attacks~\citep{rigaki2020survey}.
Other
real-world scenarios where an attacker can access models trained on private datasets include interaction with APIs (e.g., Amazon Rekognition~\citep{amazon-rec}), or in the growing paradigm of federated learning~\citep{bonawitz2019towards,yang2019federated}, where a malicious insider can try to steal data from other data centers~\citep{hitaj2017deep}. ~\citet{kaissis2020secure} provides an overview of how these privacy concerns are hindering the widespread application of deep learning in the medical domain.

This paper focuses on the model inversion (MI) attack, a type of privacy attack that tries to recover the training set given access only
to a trained classifier~\citep{chen2020improved,fredrikson2015model,fredrikson2014privacy,hidano2017model,khosravy2020deep,yang2019adversarial,zhang2020secret}.
The classifier under attack is referred to as the ``target classifier'' (see Figure~\ref{fig:fig1}).
An important relevant application concerns identity recognition from face images,
since facial recognition is a common approach to biometric identification~\citep{galbally2014biometric,rathgeb2011survey}.
Once an attacker successfully reconstructs images of the faces of individuals in the private training set, they can use the stolen identity to break into otherwise secure systems.

\begin{figure}[t]
    \centering
    \includegraphics[width=\textwidth, trim=0 280 0 0, clip]{"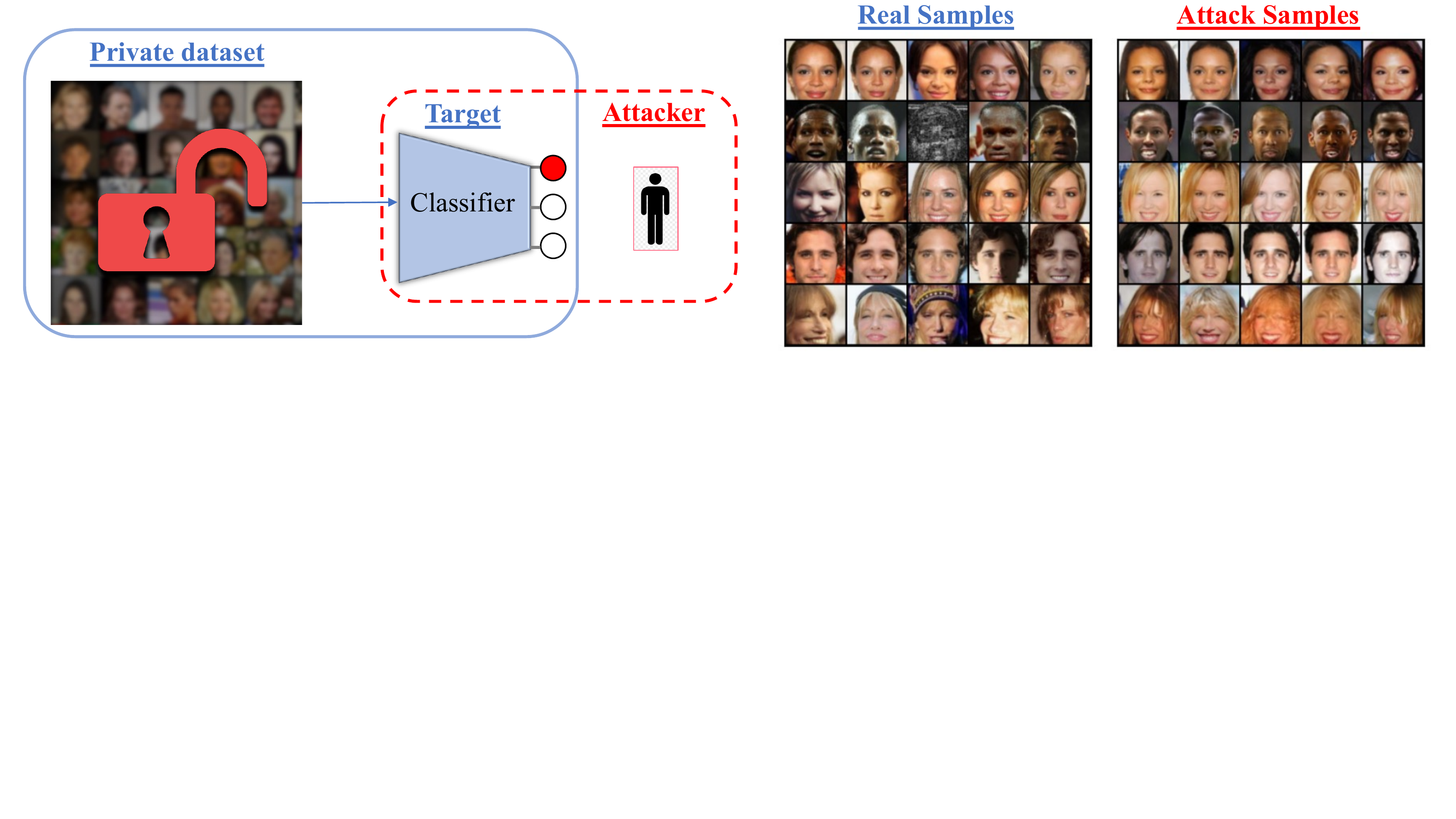"}
    \caption{
     In model inversion attacks, the attacker tries to recover the original dataset used to train a classifier with access to only the trained classifier.
     In the two figures on the right, each row corresponds to a private identity -- an individual whose images are not available to the attacker.  The first five identities (not cherry-picked) were visualized.  On the left are real data samples, and on the right are generated samples from our proposed method.
    }
    \label{fig:fig1}
\end{figure}

Early studies of MI attacks focused on attacking classifiers whose inputs are tabular data~\citep{fredrikson2015model,fredrikson2014privacy,hidano2017model} by directly performing inference in the input space.
However, the same methods fail when the target classifier processes high-dimensional inputs, e.g., in the above example of a face recognizer.
Recent methods cope with this high-dimensional search space by performing the attack in the latent space of a neural network generator~\citep{khosravy2020deep,yang2019adversarial,zhang2020secret}.
Introducing a generator works well in practice, but this success is not theoretically understood.
In this paper, we frame MI attacks as a variational inference (VI)~\citep{jordan1999introduction,blei2017variational,kingma2013auto} problem, and derive a practical objective. This view allows us to justify existing approaches based on a deep generative model, and identify missing ingredients in them. Empirically, we find that our proposed objective leads to significant improvements: more accurate and diverse attack images,
which are quantified using standard sample quality metrics (i.e., FID~\citep{heusel2017gans}, precision, and recall~\citep{kynkaanniemi2019improved,naeem2020reliable}).

In summary, our contributions and findings are:
\begin{itemize}
    \item We view the model inversion (MI) attack problem as a variational inference (VI) problem.
    This view allows us to derive a practical objective based on a statistical divergence, and provides a unified framework for analyzing existing methods.
    \item We provide an implementation of our framework using a set of deep normalizing flows~\citep{dinh2014nice, dinh2016density, kingma2018glow} in the extended latent space of a StyleGAN~\citep{karras2020analyzing}. This implementation can leverage the hierarchy of learned representations, and perform targeted attacks while preserving diversity.
    \item Empirically, on the CelebA~\citep{liu2015faceattributes}, and ChestX-ray~\citep{wang2017chestx} datasets, our method results in higher attack accuracy, and more diverse generation compared to existing methods.
    We provide thorough comparative experiments to understand which components contribute to improved target accuracy, sample realism, and sample diversity, and detail the benefits of the VI perspective.
\end{itemize}
Code can be found at \url{https://github.com/wangkua1/vmi}.

\section{Problem Setup: Model Inversion Attack}
\label{sec:mi-attack}

In the problem of model inversion (MI) attack, the attacker has access to a ``target classifier'':
$$\bptar(y|\xx) : \mathbb{R}^d \rightarrow \Delta^{C-1}$$
where $\Delta^{C-1}$ to denote the $(C-1)$-simplex, representing the $C$-dimensional probability where $C$ is the number of classes.
This target classifier is trained on the private target dataset, $\mathcal{D}_{\textsc{tar}} = \{\xx_i, y_i\}_{i=1}^{N_{\textsc{tar}}}$ where $\xx \in \mathbb{R}^d$ and $y \in \{1,2,...,C\}$.
We use $\bptar(y|\xx)$ to denote the given target classifier, which is an approximation of the true conditional probability $\ptar(y|\xx)$ of the underlying data generating distribution.

\paragraph{Goal.} Given a target classifier $\bptar(y|\xx)$, we wish to approximate the class conditional distribution $\ptar(\xx|y)$ without having access to the private training set $\mathcal{D}_{\textsc{tar}}$.

A good model inversion attack should approximate the Bayes posterior $\ptar(\xx|y) \propto \ptar(y|\xx)\ptar(\xx)$ well, and allow the attacker to generate realistic, accurate, and diverse samples.
Notice, our goal shifts slightly from recovering the exact instances in the training set to recovering the data generating distribution of the training data.
Let's consider a scenario where the attacker intends to fake a victim's identity.
A good security system might test the realism of a sequence of inputs~\citep{holland2013complex}.
More generally, faking someone's identity requires passing a two-sample test~\citep{holland2013complex,lehmann2006testing}, which means the ability to generate samples from $\ptar(\xx|y)$ is necessary.
Another scenario can be an attacker trying to steal private/proprietary information.  For example, a malicious insider in a federated learning setting can try to steal information from another datacenter by accessing the shared model~\citep{hitaj2017deep}.  In such a scenario, a good $\ptar(\xx|y)$ model can reveal not only the distinguishing features of the data, but also its variations.

\section{Related Work}
\label{sec:related-work}
In this section, we first discuss related model inversion attack studies, and then discuss applications of inverting a classifier outside of the privacy attack setting.  Lastly, since our attack method relies on using a pretrained generator, we discuss existing studies that used a pretrained generator for improving other applications.

\paragraph{Model inversion attacks.}
MI attacks are one type of privacy attack where the attacker tries to reconstruct the training examples given
access only to a target classifier.
Other privacy attacks include membership attacks, attribute inference, and model extraction attacks (see~\citet{rigaki2020survey} for a general discussion).
Fredrikson et al.~\citep{fredrikson2014privacy,fredrikson2015model} were the first to study the MI attack and demonstrated successful attacks on low-capacity models (e.g., logistic regression, and a shallow MLP network) when partial input information was available to the attacker.
The setting where partial input information was not available was studied by~\citet{hidano2017model}. They termed performing gradient ascent w.r.t. the target classifier in the input space a \textit{General Model Inversion} attack.
Though effective on tabular data, this approach failed on deep image classifiers. It is well-known that directly performing gradient ascent in the image space results in imperceptible and unnatural changes~\citep{szegedy2013intriguing}.

To handle high-dimensional observations such as images, MI attacks based on deep generators were proposed~\citep{zhang2020secret,chen2020improved,yang2019adversarial}.
By training a deep generator on an \textit{auxiliary dataset}, these methods effectively reduce the search space to only the manifold of relevant images.
For example, if the target classifier is a facial recognition system, then an auxiliary dataset of generic faces can be easily obtained.
~\citet{zhang2020secret} pretrain a Generative Adversarial Network (GAN) \citep{goodfellow2014generative} on the auxiliary dataset and performed gradient-based attack in the latent space of the generator.  ~\citet{yang2019adversarial} trained a generator that inverts the prediction of the target classifier, using the architecture of an autoencoder.

\paragraph{Other applications of classifier inversion.}
Inverting the predictions and activations of a classifier has been studied for applications such as model explanation, and model distillation.
Motivated by using saliency maps to explain the decision of a classifier~\citep{simonyan2013deep}, ~\citet{mahendran2015understanding} proposed to maximize the activation of a target neuron in a classifier for synthesizing images that best explain that neuron.
Their method is based on gradient ascent in the pixel-space.
The same idea was recently extended to enable knowledge distillation in a setting where the original dataset was too large to be transferred~\citep{yin2020dreaming}.
Methods above have the advantage of not requiring the original dataset, but the generated images still appear unnatural.
One effective way to improve the realism of the generated images is to optimize in the latent code space of a deep generator~\citep{nguyen2015deep,nguyen2017plug}.

\paragraph{Pretrained generators.}
Pretrained generators from GANs have other applications.
Image inpainting/restoration was one of the early successful applications of using a pretrained GAN as image prior. ~\citet{yeh2017semantic} optimized the latent code of GANs to minimize the reconstruction error of a partially occluded image for realistic inpainting.
The same method formed the basis of AnoGAN, a popular technique for anomaly detection in medical images~\citep{schlegl2017unsupervised}. A combination of reconstruction error and discriminator loss was used as the anomaly score.
A potential future direction is to apply our proposed method to these adjacent applications.

\section{Model Inversion Attack as Variational Inference}
\label{sec:mi-attack-as-vi}
In this section, we derive our  {\it Variational Model Inversion (VMI)} learning objective.
For each class label $y$, we wish to approximate the true target posterior with a variational distribution $q(\xx)\in \mathcal{Q}_\xx$, where $\mathcal{Q}_\xx$ is the variational family.
This can be achieved by the optimizing the following objectives
\begin{align}
    q^*(\xx) &= \argmin_{q \in \mathcal{Q}_\xx} \big\{ \KL(q(\xx)||\ptar(\xx|y)) \big\}  \nonumber \\
    &= \argmin_{q \in \mathcal{Q}_\xx} \big\{ \E_{q(\xx)}{[-\log \ptar(y|\xx)]} + \KL(q(\xx)||\ptar(\xx)) + \log \ptar(y)  \big\} \nonumber \\
     &= \argmin_{q \in \mathcal{Q}_\xx} \big\{ \E_{q(\xx)}{[-\log \ptar(y|\xx)]} + \KL(q(\xx)||\ptar(\xx)) \big\} . \label{eq:opt_posterior}
\end{align}

where in Equation~\ref{eq:opt_posterior} we have used the fact that $\log \ptar(y)$ is independent of $q(\xx)$, and thus can be dropped in the optimization.

\paragraph{Auxiliary Image Prior.} In order to minimize Equation~\ref{eq:opt_posterior}, we need to compute $\KL(q(\xx)||\ptar(\xx))$, which requires evaluating the true image prior $\ptar(\xx)$, that is not accessible. Similar to recent works on MI attacks~\citep{yang2019adversarial,zhang2020secret,khosravy2020deep}, we assume that
a public auxiliary dataset $\mathcal{D}_{\textsc{aux}}$, which shares structural similarity with the target dataset, is available to learn an auxiliary image prior $\paux(\xx)$.
This auxiliary image prior $\paux(\xx)$ is learned using a generative adversarial network (GAN) $\paux(\xx) = \E_{\paux(\zz)}[\pg(\xx|\zz)]$, where $\paux(\zz)=\mathcal{N}(0,I)$ is the GAN prior, and $\pg(\xx|\zz)$ is the GAN generator that is parameterized by a deep neural network $G(\zz)$, and a Gaussian observation noise with small standard deviation $\sigma$: $\pg(\xx|\zz) = \mathcal{N}\big( G(\zz), \sigma^2I \big)$, or equivalently
$$\xx = G(\zz) + \sigma\eps, \quad \eps \sim \mathcal{N}(0, I).$$

The learned generator $\pg(\xx|\zz)$ defines a low dimensional manifold in the observation space $\xx$, when $\zz$ is chosen to have fewer dimensions than $\xx$. We now make the following assumption about the structural similarity of $\paux(\xx)$ and $\ptar(\xx)$.

\begin{assumption}[Common Generator]
\label{assumption:cg}
We assume $\ptar(\xx)$ is concentrated close to the low dimensional manifold of $\pg(\xx|\zz)$, learned from the auxiliary dataset $\mathcal{D}_\aux$. In other words, we assume there exists a target prior $\ptar(\zz)$, possibly different from $\paux(\zz)$, such that $\ptar(\xx) \approx \E_{\ptar(\zz)}[\pg(\xx|\zz)]$. We refer to $\pg(\xx|\zz)$ as the ``common generator''.
\end{assumption}

The name ``common generator'' is inspired by recent progress in theoretical analyses of few-shot and transfer learning, where assumptions about an underlying shared representation have been used to provide error bounds. In these works, this shared representation is referred to as the ``common representation''~\citep{du2020few,tripuraneni2020theory}.
Assumption~\ref{assumption:cg} usually holds in the context of model inversion attacks. For example, $\paux(\xx)$ could be the distribution of all human faces, while $\ptar(\xx)$ could be the distribution of the celebrity faces; or $\paux(\xx)$ could be the distribution of natural images, while $\ptar(\xx)$ could be the distribution of different breeds of dogs. In these examples, both $\paux(\xx)$ and $\ptar(\xx)$ approximately live close to the same low dimensional manifold, but the distribution on this shared manifold could be different, i.e., $\ptar(\zz) \neq \paux(\zz)$.

\paragraph{Variational Family.} We further consider our variational family $\mathcal{Q}_\xx$ to be all distributions that lie on this manifold by assuming that $q(\xx)$ is of the form $q(\xx) = \E_{q(\zz)}[\pg(\xx|\zz)],\,q(\zz)\in \mathcal{Q}_\zz$, where $\mathcal{Q}_\zz$ is a variational family in the code space. In the next section, we further restrict $\mathcal{Q}_\zz$ to be either Gaussian distributions or normalizing flows.

\begin{restatable}{proposition}{propkl}\label{prop:kl} We have
$\KL \big(q(\zz)||\ptar(\zz)\big) \geq \KL \big(\E_{q(\zz)}[\pg(\xx|\zz)]||\E_{\ptar(\zz)}[\pg(\xx|\zz)]\big)$.
\end{restatable}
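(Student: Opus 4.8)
The plan is to recognize this statement as the data processing inequality for the KL divergence: pushing both $q(\zz)$ and $\ptar(\zz)$ through the same stochastic channel $\pg(\xx|\zz)$ cannot increase their divergence. I would prove it via the chain rule for KL applied to two carefully constructed joint distributions that share the common generator as their conditional.

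First I would form the two joints over $(\zz,\xx)$ given by $Q(\zz,\xx) = q(\zz)\pg(\xx|\zz)$ and $P(\zz,\xx) = \ptar(\zz)\pg(\xx|\zz)$. By construction their $\xx$-marginals are exactly $q(\xx) = \E_{q(\zz)}[\pg(\xx|\zz)]$ and $\ptar(\xx) = \E_{\ptar(\zz)}[\pg(\xx|\zz)]$, which are precisely the two arguments appearing on the right-hand side of the claim.

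Next I would evaluate $\KL(Q \| P)$ using the chain rule in two different factorization orders. Factoring as $\zz$ first and then $\xx$ given $\zz$ gives $\KL(Q\|P) = \KL(q(\zz)\|\ptar(\zz)) + \E_{q(\zz)}[\KL(\pg(\xx|\zz)\|\pg(\xx|\zz))]$, and the conditional term vanishes identically because both joints use the same generator; hence $\KL(Q\|P) = \KL(q(\zz)\|\ptar(\zz))$, recovering the left-hand side exactly. Factoring instead as $\xx$ first and then $\zz$ given $\xx$ gives $\KL(Q\|P) = \KL(q(\xx)\|\ptar(\xx)) + \E_{q(\xx)}[\KL(q(\zz|\xx)\|\ptar(\zz|\xx))]$, where the posteriors are those induced by $Q$ and $P$. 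Dropping the second term, which is nonnegative since every KL divergence is nonnegative, yields $\KL(Q\|P) \geq \KL(q(\xx)\|\ptar(\xx))$. Chaining the equality with this inequality produces the claim.

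The computations are routine; the only real care lies in the bookkeeping. The key point to get right is that both joints must share the identical conditional $\pg(\xx|\zz)$, so that the forward conditional-KL term vanishes \emph{exactly} --- this is what collapses the left-hand side to $\KL(q(\zz)\|\ptar(\zz))$ rather than to something strictly larger. Beyond that, the two-way factorization implicitly requires the posteriors $q(\zz|\xx)$ and $\ptar(\zz|\xx)$ to be well-defined and the relevant Radon--Nikodym derivatives to exist, which holds under the Gaussian-observation model $\pg(\xx|\zz) = \mathcal{N}(G(\zz),\sigma^2 I)$ assumed in the text; I would flag this as the one place where a fully rigorous treatment needs a sentence of measure-theoretic justification.
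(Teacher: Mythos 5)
Your proof is correct and is essentially identical to the paper's: the paper also writes $\KL(q(\zz)\|\ptar(\zz)) = \KL(q(\zz)\pg(\xx|\zz)\|\ptar(\zz)\pg(\xx|\zz))$ (your $\zz$-first factorization with vanishing conditional term) and then applies the chain rule in the $\xx$-first order, dropping the nonnegative conditional KL, which is exactly your data-processing argument. No substantive differences to report.
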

See Appendix~\ref{app:proof} for the proof.
Using Assumption~\ref{assumption:cg}, we can approximate the objective of Equation~\ref{eq:opt_posterior} with
\small
\begin{align}
    \label{eq:code}
     q^*(\zz) = \argmin_{q \in \mathcal{Q}_\zz} \big\{\E_{\zz \sim q(\zz), \eps \sim \mathcal{N}(0,I)} [-\log \ptar(y|G(\zz)+\sigma\eps)] + \KL \big(\E_{q(\zz)}[\pg(\xx|\zz)]||\E_{\ptar(\zz)}[\pg(\xx|\zz)]\big) \big\},
\end{align}
\normalsize
where this objective is now defined in terms of $q(\zz) \in \mathcal{Q}_\zz$ in the latent space of the GAN. Now using the Proposition~\ref{prop:kl}, we can obtain the following upper bound on the objective of Equation~\ref{eq:code} (assuming $\sigma = 0$)
\begin{align}
    \label{eq:code-target}
    q^*(\zz) = \argmin_{q \in \mathcal{Q}_\zz} \big\{\E_{\zz \sim q(\zz)} [-\log \ptar(y|G(\zz))] + \KL(q(\zz) || \ptar(\zz))\big\}.
\end{align}

\paragraph{Power Posteriors.}
We now consider the optimization of Equation~\ref{eq:code-target}. Since we do not have access to $\ptar(\zz)$, we propose to replace it with $\paux(\zz)$, noting that, as argued above, $\paux(\zz)$ may not accurately represent $\ptar(\zz)$.
We also propose to replace the unknown likelihood $\ptar(y|G(\zz))$ with the $\bptar(y|G(\zz))$ defined by the target classifier, noting that this likelihood could be miscalibrated.
Miscalibration is a known problem in neural network classifiers~\citep{guo2017calibration}; namely, the classifier predictions can be over- or under-confident.
In order to account for these issues, instead of the standard Bayes posterior $q_{\textsc{bayes}}(\zz) \propto \paux(\zz)\bptar(y|G(\zz))$ as the solution of Equation~\ref{eq:code-target}, we consider the family of ``power posterior'' \footnote{These posterior updates are sometimes referred to as ``power likelihood'' or ``power prior'' \citep{bissiri2016general}.} \citep{miller2018robust, holmes2017assigning, knoblauch2019generalized,bissiri2016general}, which raise the likelihood function to a power $\frac{1}{\gamma}$
\begin{align}q_\gamma^*(\zz)\propto \paux(\zz)\bptar^{\frac{1}{\gamma}}(y|G(\zz)), \nonumber
\end{align}
and can be characterized as the solution of the following optimization problem:
\begin{align}
q_\gamma^*(\zz) &= \argmin_{q \in \mathcal{Q}_\zz} \Lvmi(q), \label{eqn:jcg}\\
\Lvmi(q) & \coloneqq  \E_{\zz \sim q(\zz)} [-\log \bptar(y|G(\zz))] + \gamma \KL(q(\zz) || \paux(\zz)). \nonumber
\end{align}
See Appendix~\ref{app:proof} for the proof. If the prior $\paux(\zz)$ is not reliable, by decreasing $\gamma$, we can reduce the importance of the prior. Similarly, by adjusting $\gamma$, we can reduce or increase the importance of over- or under-confident likelihood predictions $\bptar(y|G(\zz))$.

\vspace{-.2cm}
\paragraph{Accuracy vs. Diversity Tradeoff.} In the context of model inversion attack, the Bayes posterior $q_{\textsc{bayes}}(\zz)$ can only achieve a particular tradeoff between the accuracy and diversity of the generated samples. However, the family of power posteriors enables us to achieve an arbitrary desired tradeoff between accuracy and diversity by tuning $\gamma$. If $\gamma =0$, then $q_{\gamma=0}^*(\zz)=\delta(\zz-\zz^*)$, and the model essentially ignores the image prior and diversity, and finds a single point estimate $\zz^*$ on the manifold which maximizes the accuracy $\bptar(y|G(\zz))$. As we increase $\gamma$, the the model achieves larger diversity at the cost of smaller average accuracy. When $\gamma=1$, we recover the Bayes posterior $q_{\gamma=1}^*(\zz)\propto \paux(\zz)\bptar(y|G(\zz))$. In the limit of $\gamma\rightarrow \infty$, we have $q_{\infty}^*(\zz)=\paux(\zz)$. In this case, the model ignores the classifier and accuracy, and sample unconditionally from the auxiliary distribution (maximum diversity). We empirically study the effect of $\gamma$ in Section~\ref{sec:exp}.

\vspace{-.2cm}
\subsection{The Common Generator}
\vspace{-.2cm}
\label{sec:common-generator}
In the previous section, we derived a practical MI objective from the perspective of variational inference.  Similar to previous works~\citep{zhang2020secret,chen2020improved} VMI requires a common generator that is trained on a relevant auxiliary dataset.  In this subsection, we first provide a brief description of the most common choice, DCGAN. Then, we introduce StyleGAN, which has an architecture that allows for fine-grained control.  Finally, we describe how to adapt our VMI objective to leverage this architecture by using a layer-wise approximate distribution.

For the common generator $G$ we use a GAN due to their ability to synthesize high quality samples~\citep{abdal2019image2stylegan,bau2018gan,karras2019style,shen2020interfacegan}.
The generator is trained on the auxiliary dataset.
A common choice is the DCGAN architecture~\citep{radford2015unsupervised}.
Though optimizing in the latent space results in more realistic samples, this search space is also more restrictive.
Early investigation showed that optimizing at some intermediate layers of a pretrained generator was better than either optimizing in the code space, or the pixel space.
This motivated the use of a generator architecture where activations of intermediate layers can be manipulated.

\begin{figure}[t]
    \centering
    \includegraphics[width=0.8\linewidth, trim=200 280 50 0, clip]{"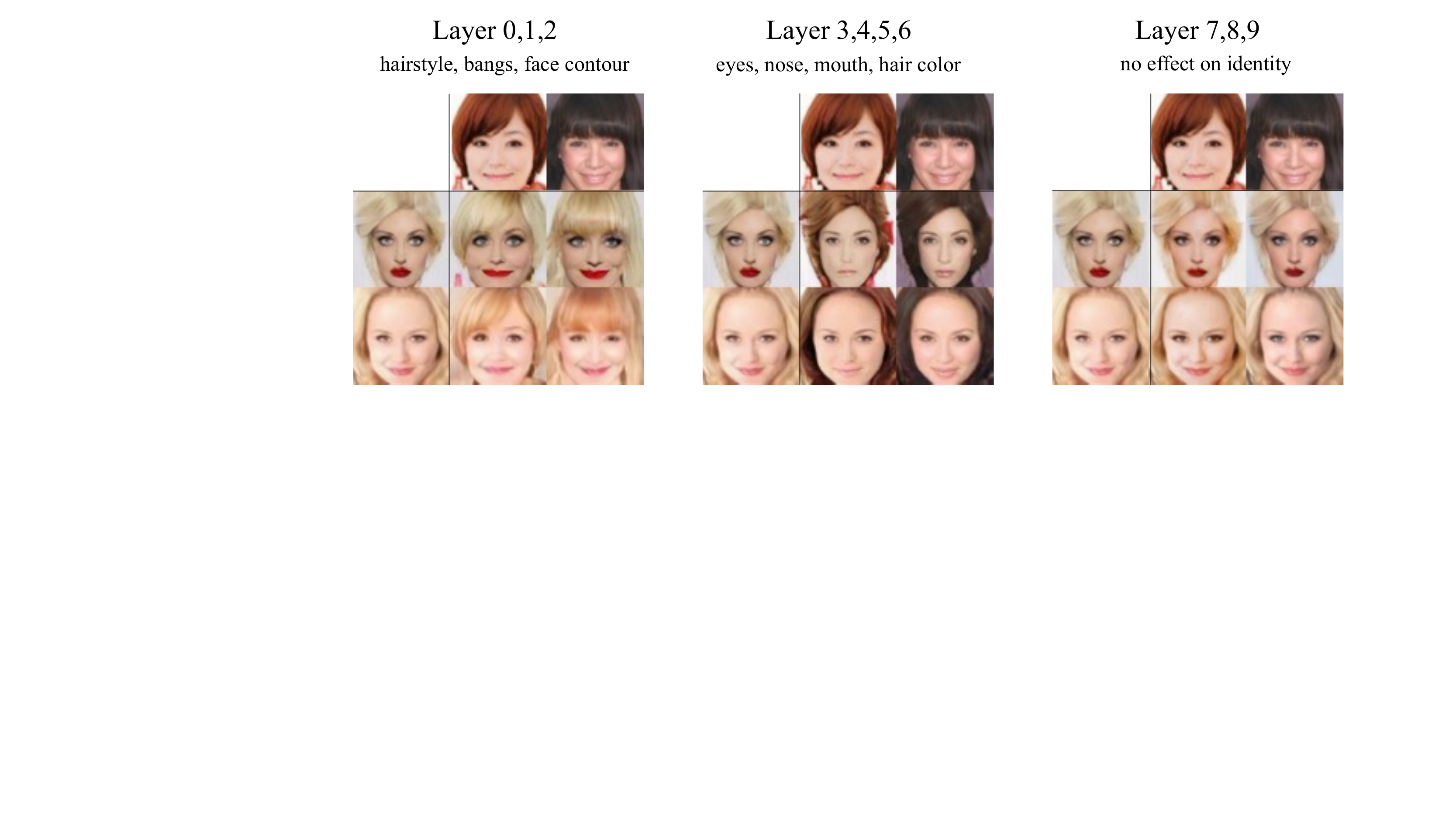"}
    \vspace{-.3cm}
    \caption{ Visualizing the factors of variations in each layer of the StyleGAN~\citep{karras2019style} using style mixing. Each image is generated by mixing the the corresponding image in the topmost row, and the image in the leftmost column. The topmost row image provides the features for the indicated set of layers, and the leftmost column image provides features for the remaining layers. We can see that in general, the earlier features contribute more to the facial features, while the later features contribute more to the low-level images statistics such as lighting and general shading.
    Given this learned layer-wise disentanglement, our VMI attack can adaptively discover layers that contribute the most to the face identity (layers $1,2,4,5,6$), and only manipulates the corresponding features (see Figure~\ref{fig:stylegan-layers}).
    This is not facilitated by the generator of a DCGAN, as found in existing MI attack methods~\citep{zhang2020secret,chen2020improved}.
    }
    \label{fig:stylegan-illustration}
\end{figure}

\paragraph{StyleGAN.}

A framework that naturally allows for optimization in the intermediate layers is the StyleGAN~\citep{karras2019style}. It consists of a mapping network $f:\zz \mapsto \ww$, and a synthesis network $S$. The synthesis network takes as input one $\ww$ at each layer, $S: \{\ww_l\}_{l=1}^L \mapsto \xx$, where $L$ is the number of layers in $S$.
During training of a StyleGAN, the output from the mapping network is copied $L$ times, $\{\ww\}_{l=1}^L = \textsc{copy}(\ww, L)$ before being fed to the synthesis network.
This input space of the synthesis network $S$ has been referred to as the ``extended $\ww$ space''~\citep{abdal2019image2stylegan}.
The full generator is \small$G_{\textsc{style}}(\zz) = S\Big(\textsc{copy}\big(f(\zz), L\big)\Big)$\normalsize.

\begin{wrapfigure}{r}{.5\linewidth}
\vspace{-.3cm}
    \centering
    \includegraphics[width=\textwidth, trim=10 280 460 0, clip]{"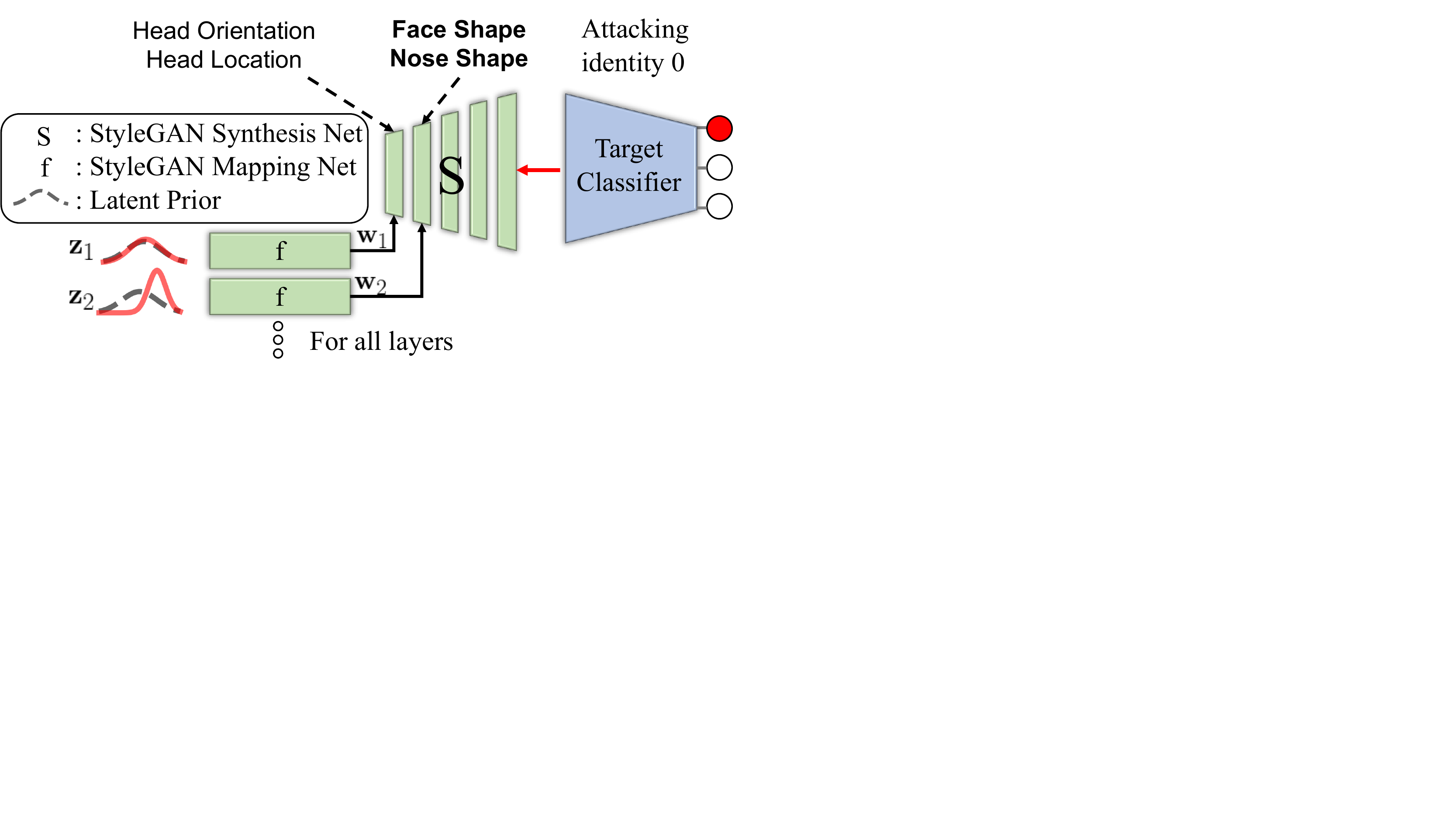"}
    \vspace{-1cm}
    \caption{
    Utilizing the StyleGAN architecture, our VMI attack tends to focus on layers whose representation are relevant for the attack, such as the layer for ``face shape''.
    }
    \label{fig:method}
\end{wrapfigure}

StyleGAN is known for its surprising ability to perform ``style mixing''. Namely, the synthesis network can generate a ``mixed'' image, when being fed a mixture of two $\ww$'s in the extended $\ww$ space, i.e., $S([\{\ww_1\}_{l=1}^{L_0};\{\ww_2\}_{l=L_0+1}^{L}])$, where $[\cdot;\cdot]$ denotes concatenation, and $L_0$ is some intermediate layer.
Figure~\ref{fig:stylegan-illustration} visualizes the effect of manipulating different layers of the synthesis network.
Performing an attack in this extended $\ww$ space allows us to achieve the desired effect of optimizing the intermediate layers.
Because of the lack of a explicit prior for $\ww$, we instead optimize the extended $\zz$ space (Figure~\ref{fig:method}). Suppose $q(\zz_1,\ldots,\zz_L)$ is the joint density over $\zz_1,\ldots,\zz_L$, and $q_l(\zz_l)$ is the marginal density over $\zz_l$. We consider the following objective for the VMI with the StyleGAN generator:
\vspace{-.5em}
\begin{align}\label{eqn:jcgs}
    \Lsvmi(q) \coloneqq \E_{q(\zz_1,\ldots,\zz_L)} \Big[-\log \bptar\Big(y \Big|S\big(\{f(\zz_l)\}_{l=1}^L\big)\Big)\Big] +  \frac{\gamma}{L}\sum_{l=1}^L\KL(q_l(\zz_l) || \paux(\zz_l)).
\end{align}

\vspace{-.5cm}
\subsection{Code Space Variational Family $\mathcal{Q}_\zz$}
\vspace{-.2cm}
\label{sec:method:q}
In the VMI formulations (Equation~\ref{eqn:jcg} and~\ref{eqn:jcgs}), the approximate distribution $q(\zz)$ can be seen as a generator, or ``miner'' network~\citep{wang2020minegan} for the latent space of the common generator.
It must belong to a distribution family $\mathcal{Q}_\zz$, whose distribution can be sampled from, and can be evaluated in order to optimize the KL-divergence.
In this work, we experiment with a powerful class of tractable generative models: deep flow models. When optimizing $\Lsvmi(q)$ (Equation~\ref{eqn:jcgs}), we use $L$ such models, one for each layer in the synthesis network.
In our experiments,  we used an architecture similar to that of Glow~\citep{kingma2018glow}, where we treated the latent vector as 1x1 images.
We removed the original squeezing layers that reduced the image size to account for this change.
We also experimented with using a Gaussian variational family for $q(\zz)$.

\subsection{Relationships with Existing Methods}
\vspace{-.2cm}
VMI provides a unified framework that encompasses existing attack methods. Through the lens of VMI, we can obtain insights on the conceptual differences between the existing methods and their pros and cons.
``General MI'' attack~\citep{hidano2017model} performs the following optimization:
\begin{align}
    \label{eqn:gmi}
    \xx^* = \argmax_\xx \log \bptar(y|\xx).
\end{align}
Note that this objective can be viewed as special case of VMI, where the code space is the same as the data space, and we have $\gamma=0$ in the objective: $\Lvmig{\gamma=0}(q)$. In this case, as discussed in the previous section, the $q(\xx)$ distribution collapses to a point estimate at which $\bptar(y|\xx)$ is maximized. As we will see in the experiment section, since the optimization is in the data space, the solution does not live close the manifold of images, and thus generally does not structurally looks like natural images.

``Generative MI'' attack~\citep{zhang2020secret} also uses a generator trained on an ``auxiliary'' dataset.  The attack optimizes the following objective:
\begin{align}
    \label{eqn:genmi}
    \zz^* = \argmin_\zz \big\{-\lambda\log \bptar(y|(G(\zz))-\log\sigma\big(D(G(\zz))\big)\big\}
\end{align}
where $G, D$ denotes the generator and discriminator respectively, and $\sigma(\cdot)$ is the sigmoid function.
Note that this objective can be viewed as special case of VMI, with $\gamma=\frac1\lambda$, and with the prior $p(\zz) \propto \sigma\big(D(G(\zz)\big)$, and a point estimate for $\zz$, instead of optimizing over distributions $q(\zz)$.
As a result of this point-wise optimization, the Generative MI attack requires re-running the optimization procedure for every new samples. In contrast, VMI learns a distribution $q(\zz)$ using expressive normalizing flows, which allows us to draw multiple attack samples after a single optimization procedure.
Recently, \citet{chen2020improved} proposed extending Generative MI by considering a Gaussian variational family in the latent space of the pretrained generator in Equation~\ref{eqn:genmi}. However, we note this method still theoretically results in collapsing the Gaussian variational distributions to a point estimate. VMI prevents this by including the entropy of the variational distribution in its objective (in the KL divergence term). From the implementation perspective, \citet{chen2020improved} uses DCGAN while we use StyleGAN which helps us to achieve better disentanglement. Furthermore, we use a more expressive variational family such as normalizing flows.

\vspace{-.2cm}
\section{Experiments}
\label{sec:exp}

\begin{figure}[t]
    \centering
    \includegraphics[width=\linewidth]{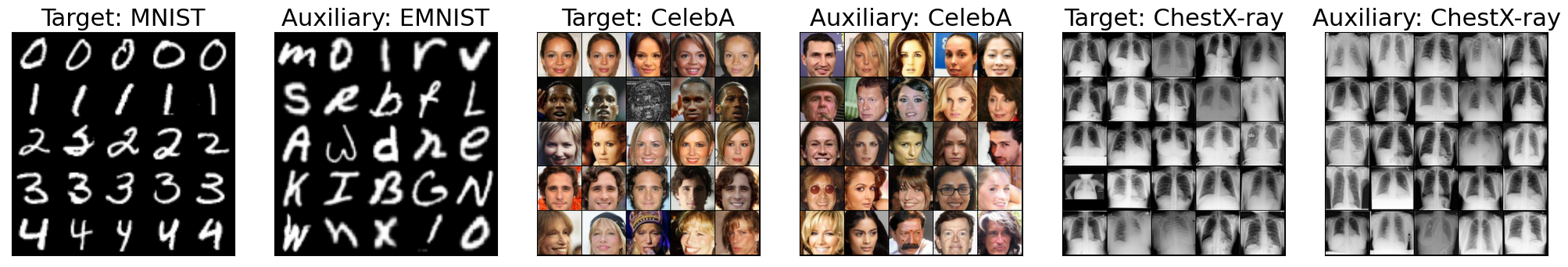}
    \caption{Visualization of real samples for all 3 tasks: MNIST, CelebA, and ChestX-ray. For the target datasets, each row corresponds to one class/identity. Best viewed zoomed in.\vspace{-1em}}
    \label{fig:data}
\end{figure}

In this section, we demonstrate the effectiveness of the proposed VMI attack method on three separate tasks.
We will refer to each task by their target dataset: MNIST~\citep{lecun2010mnist}, CelebA~\citep{liu2015faceattributes}, and ChestX-ray (CXR)~\citep{wang2017chestx}.
Then, we showcase the robustness of the VMI attack to different target classifiers on CelebA.
More experimental details can be found in Appendix~\ref{app:experimental}.

\vspace{-.2cm}
\paragraph{Data.}
For the MNIST task, we used the `letters' split (i.e., handwritten English alphabets) of the EMNIST~\citep{cohen2017emnist} dataset as the auxiliary dataset.
For CelebA, we used the 1000 most frequent identities as the target dataset, and the rest as the auxiliary dataset.
For ChestX-ray, we used the 8 diseases outlined by~\citet{wang2017chestx} as the target dataset, and randomly selected 50,000 images from the remaining as the auxiliary dataset. Real samples from these six datasets are visualized in Figure~\ref{fig:data}.

\vspace{-1em}
\paragraph{Target classifiers.}
The target classifiers used in all tasks were ResNets trained using SGD with momentum. Optimization hyperparameters were selected to maximize accuracy on a validation set of the private data.

\begin{table}[t]
\resizebox{\columnwidth}{!}{
\setlength{\tabcolsep}{3pt}
\begin{tabular}{l|ccc|ccc}
\toprule
                                               & \multicolumn{3}{c|}{\textbf{Accuracy$\uparrow$}}                                                                      & \multicolumn{3}{c}{\textbf{FID $\downarrow$}}                                                                        \\
                                               & \multicolumn{1}{l}{\textbf{MNIST}} & \multicolumn{1}{l}{\textbf{CelebA}} & \multicolumn{1}{l|}{\textbf{CXR}} & \multicolumn{1}{l}{\textbf{MNIST}} & \multicolumn{1}{l}{\textbf{CelebA}} & \multicolumn{1}{l}{\textbf{CXR}} \\
                                               \midrule
\textbf{General MI}~\citep{hidano2017model}    & 0   {\tiny$\pm$ 0.00 }                 & 0{\tiny$\pm$ 0.00}           & 0.23{\tiny$\pm$ 0.29 }       & 376.7 {\tiny \ (57.4)}         & 421.21  {\tiny \ (31.3)}                             & 499.54   {\tiny \ (96.3)}                                 \\
\textbf{Generative MI}~\citep{zhang2020secret}                  & 0.92   {\tiny$\pm$ 0.02 }              & 0.07{\tiny$\pm$ 0.02 }       & 0.28{\tiny$\pm$ 0.25 }       & 88.91  {\tiny \ (57.4)}        & 43.21   {\tiny \ (31.3)}                             & 142.66   {\tiny \ (96.3)}                                 \\
\midrule
\textbf{VMI w/ DCGAN}                        & \textbf{0.95}{\tiny$\pm$ 0.02 }        & 0.37{\tiny$\pm$ 0.07 }       & \ul{ 0.42}{\tiny$\pm$ 0.28 } & \textbf{77.73} {\tiny \ (57.4)}          & 40.89 {\tiny \ (31.3)}                               & 265.14 {\tiny \ (96.3)}                                    \\
\textbf{VMI w/ StyleGAN}                     & -              & {\bf 0.55}{\tiny$\pm$ 0.06 } & {\bf 0.69}{\tiny$\pm$ 0.23 } & -              & {\bf 17.41} {\tiny \ (19.2)}                         & {\bf 123.17}  {\tiny \ (57.0)}
\\
\bottomrule
\end{tabular}
}
\vspace{-.2cm}
\caption{ MI attack comparison across tasks. Both VMI methods here used the Flow model for $q(\zz)$. Best values are in \textbf{\textit{bold}}, and values within the 95\% confidence interval ($\pm$) of the best are \underline{\textit{underlined}}. The numbers in the braces in the FID columns are FIDs of the pre-trained generators. Our VMI attack results in better accuracy and FID compared to baselines on all three tasks. \vspace{-1em} }
\label{table:main}
\end{table}
\textbf{Evaluation metrics. }
An ideal evaluation protocol should measure different aspects of the attack samples such as \textit{target accuracy}, \textit{sample realism}, and \textit{sample diversity}.

\textit{Target Accuracy}:
To automate the process of judging the accuracy of an attack, we use an ``evaluation'' classifier. This metrics measures how well the generated samples resemble the target class/identity. Given a target identity $i$, an attack produces a set of samples $\{\xx'_n\}_{n=1}^N$.
Each sample is assigned a predicted label by the evaluation classifier, and then the top-1 accuracy is computed.

If a good evaluation classifier only generalized to realistic inputs, then this metric would also penalize unrealistic attack samples.
To some extent, this was the case in our experiments as evidenced by the extremely low accuracy assigned to the General MI attack baseline in Table~\ref{table:main}.
In Figure~\ref{fig:celeba-and-mnist-samples}, the General MI attack samples were clearly unrealistic.
To ensure a fair and informative evaluation, the evaluation classifier should be as accurate and generalizable as possible (details in Appendix~\ref{app:experimental:eval}).

\textit{Sample Realism}:
We used FID~\citep{heusel2017gans} to directly measure the overall sample quality. To compute FID, we aggregated attack samples across classes and compared them to aggregated real images. Following standard practice, the feature extractor used for computing FID was the pretrained Inception network~\citep{szegedy2015going}.

\textit{Sample Diversity}:
Though FID captures realism and overall diversity, it is known to neglect properties such as intra-class diversity, which is an important feature of a good MI attack.
To overcome this limitation, we computed the improved precision \& recall~\citep{kynkaanniemi2019improved}, coverage \& density~\citep{naeem2020reliable} on a per class basis, capturing the intra-class diversity of the generated samples.
The same Inception network was used as the feature extractor.
Recall proposed in ~\citet{kynkaanniemi2019improved} can be problematic when the generated samples contain outliers.  Coverage proposed in ~\citet{naeem2020reliable} is less susceptible to outliers, but also less sensitive. For convenience of comparison, we take the average of the two quantities and call the resulting metric ``diversity''.

\vspace{-.2cm}
\paragraph{Generators.}
Two generator models were considered: DCGAN~\citep{radford2015unsupervised}, and StyleGAN~\citep{karras2019style}.
For StyleGAN, we used the original implementations.\footnote{Code from: \url{https://github.com/NVlabs/stylegan2-ada}} There were 10 and 12 layers in the synthesis networks for the 64x64 CelebA data and 128x128 ChestX-ray data respectively.
We do not use StyleGAN for the MNIST task since the images are simple.

\subsection{Results}
\paragraph{Comparison with Baselines.}
Across all three tasks, our VMI formulations outperformed baselines in terms of target accuracy and sample realism
(see Table~\ref{table:main}), even when using the same generator architecture (DCGAN).
Using a flow model for $q(\zz)$ led to improvements in terms of accuracy.
Using the formulation in Equation~\ref{eqn:jcgs} and a StyleGAN further improved performance on all metrics.
VMI also improved the overall sample qualities as evidenced by the lower FIDs.
The full VMI (Equation~\ref{eqn:jcgs}) was able to improve FID over pre-trained generators on CelebA. In general, MI attack methods including VMI sacrificed sample quality to achieve higher accuracy, suggesting there is still room for improvements for methods that can retain full realism.
Detailed evaluation using sample diversity metrics are reported in Table~\ref{table:celeba-main}.
In general, increasing $\gamma$ increased diversity and realism while sacrificing accuracy.
Our VMI attack outperformed the baseline on all metrics at multiple settings.
The reported improvements in the quantitative metrics can be verified qualitatively in Figure~\ref{fig:celeba-and-mnist-samples}.
Detailed results for MNIST and ChestX-ray (Table~\ref{table:mnist-main}, and~\ref{table:chestxray-main}) can be found in the Appendix~\ref{app:experimental:results}.
Using StyleGAN with a Gaussian $q(\zz)$ resulted in a slightly lower accuracy, but better diversity.

\begin{figure}[t]
    \centering
    \includegraphics[width=\linewidth, trim=0 130 0 0, clip]{"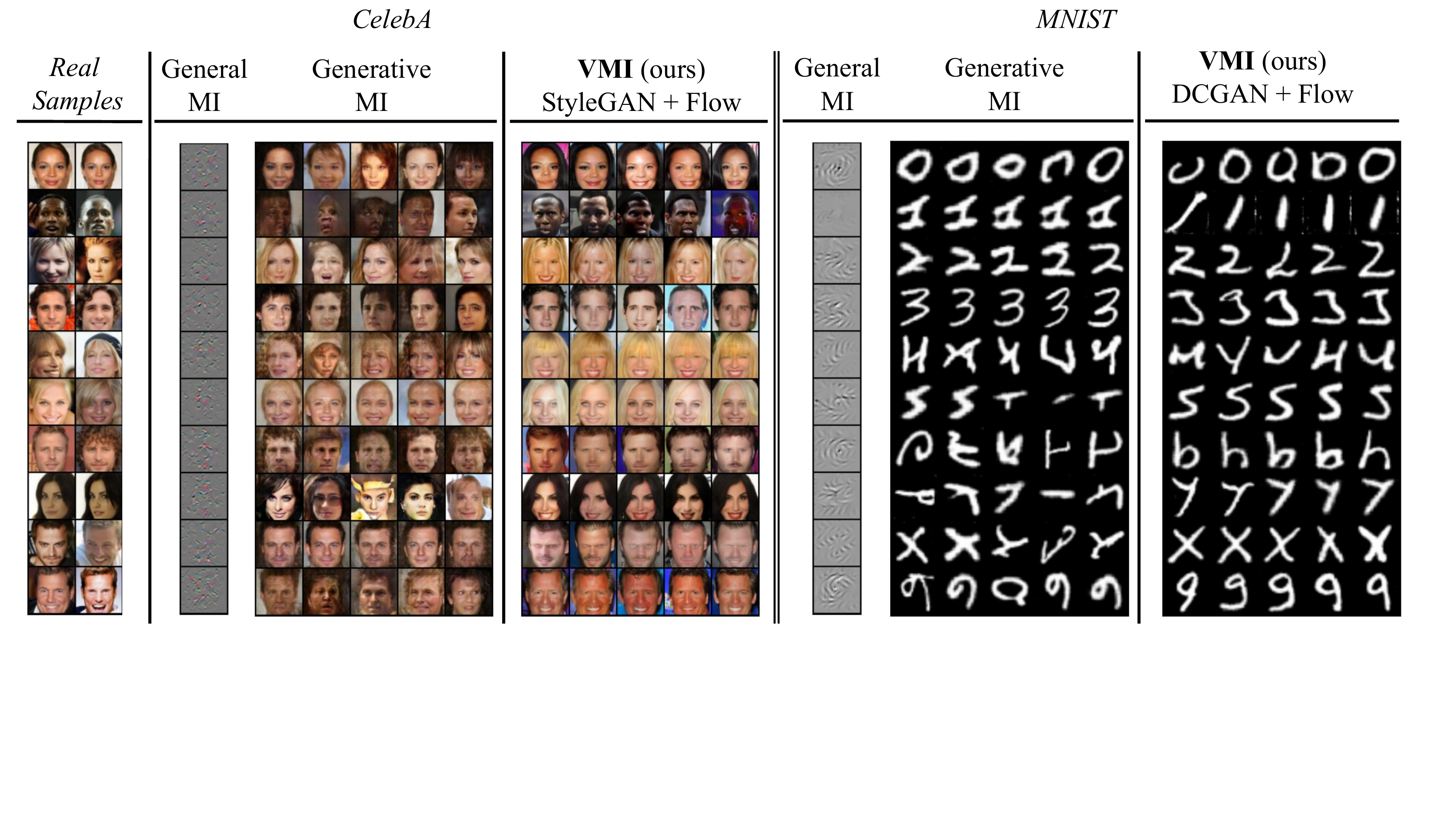"}
    \vspace{-.6cm}
    \caption{MI attack samples on the first ten identities of CelebA and MNIST. Each row corresponds to a different identity/digit. Qualitative differences are best viewed zoomed in.}
    \label{fig:celeba-and-mnist-samples}
\end{figure}
\begin{table}[t]
\resizebox{\columnwidth}{!}{
\setlength{\tabcolsep}{2pt}
\begin{tabular}{l|c|c|ccc|ccc|ccc}
\toprule
\cline{3-12}
                    & \multicolumn{1}{c|}{}           & \multicolumn{10}{c}{\cellcolor[HTML]{EFEFEF}\textbf{VMI (ours)}}                                                                                                                                                                                                                                                                                     \\ \cline{3-12}
                    & \multicolumn{1}{c|}{}           & \multicolumn{4}{c|}{\cellcolor[HTML]{EFEFEF}\textbf{DCGAN}}                                                                                          & \multicolumn{6}{c}{\cellcolor[HTML]{EFEFEF}\textbf{StyleGAN}}                                                                                                                          \\ \cline{3-12}
                    & \multicolumn{1}{c|}{Generative MI} & \multicolumn{1}{c|}{\cellcolor[HTML]{EFEFEF}\textbf{$q(\zz)$ = Gaussian}} & \multicolumn{3}{c|}{\cellcolor[HTML]{EFEFEF}\textbf{Flow}}               & \multicolumn{3}{c|}{\cellcolor[HTML]{EFEFEF}\textbf{Gaussian}}                             & \multicolumn{3}{c}{\cellcolor[HTML]{EFEFEF}\textbf{Flow}}                                 \\ \cline{3-12}
                    & \citep{zhang2020secret} & $\gamma$=0                                                                & 0                      & 1e-3                   & 1e-1                   & 0                            & 1e-3                         & 1e-1                         & 0                            & 1e-3                         & 1e-1                         \\
\midrule
\textbf{Accuracy} & 0.07{\tiny$\pm$ 0.02 }          & 0.24{\tiny$\pm$ 0.05 }                                                    & 0.33{\tiny$\pm$ 0.09 } & 0.37{\tiny$\pm$ 0.07 } & 0.13{\tiny$\pm$ 0.03 } & \ul{ 0.57}{\tiny$\pm$ 0.06 } & \ul{ 0.56}{\tiny$\pm$ 0.05 } & 0.23{\tiny$\pm$ 0.03 }       & {\bf 0.58}{\tiny$\pm$ 0.06 } & \ul{ 0.55}{\tiny$\pm$ 0.06 } & 0.39{\tiny$\pm$ 0.07 }       \\
\textbf{Precision}  & 0.51{\tiny$\pm$ 0.04 }          & 0.64{\tiny$\pm$ 0.05 }                                                    & 0.48{\tiny$\pm$ 0.08 } & 0.52{\tiny$\pm$ 0.06 } & 0.40{\tiny$\pm$ 0.06 } & \ul{ 0.87}{\tiny$\pm$ 0.02 } & \ul{ 0.88}{\tiny$\pm$ 0.02 } & 0.82{\tiny$\pm$ 0.02 }       & \ul{ 0.87}{\tiny$\pm$ 0.03 } & \ul{ 0.87}{\tiny$\pm$ 0.03 } & {\bf 0.89}{\tiny$\pm$ 0.03 } \\
\textbf{Density}    & 0.41{\tiny$\pm$ 0.04 }          & 0.67{\tiny$\pm$ 0.08 }                                                    & 0.49{\tiny$\pm$ 0.11 } & 0.52{\tiny$\pm$ 0.08 } & 0.38{\tiny$\pm$ 0.06 } & \ul{ 1.26}{\tiny$\pm$ 0.07 } & \ul{ 1.28}{\tiny$\pm$ 0.07 } & 1.14{\tiny$\pm$ 0.06 }       & 1.22{\tiny$\pm$ 0.08 }       & 1.22{\tiny$\pm$ 0.08 }       & {\bf 1.31}{\tiny$\pm$ 0.10 } \\
\midrule
\textbf{Recall}     & 0.21{\tiny$\pm$ 0.04 }          & 0.03{\tiny$\pm$ 0.01 }                                                    & 0.00{\tiny$\pm$ 0.00 } & 0.01{\tiny$\pm$ 0.01 } & 0.13{\tiny$\pm$ 0.03 } & 0.22{\tiny$\pm$ 0.03 }       & 0.25{\tiny$\pm$ 0.03 }       & {\bf 0.42}{\tiny$\pm$ 0.03 } & 0.11{\tiny$\pm$ 0.02 }       & 0.15{\tiny$\pm$ 0.03 }       & 0.21{\tiny$\pm$ 0.04 }       \\
\textbf{Coverage}   & 0.83{\tiny$\pm$ 0.03 }          & 0.79{\tiny$\pm$ 0.04 }                                                    & 0.37{\tiny$\pm$ 0.06 } & 0.67{\tiny$\pm$ 0.06 } & 0.70{\tiny$\pm$ 0.06 } & \ul{ 0.98}{\tiny$\pm$ 0.01 } & \ul{ 0.98}{\tiny$\pm$ 0.01 } & {\bf 0.98}{\tiny$\pm$ 0.01 } & 0.96{\tiny$\pm$ 0.02 }       & \ul{ 0.97}{\tiny$\pm$ 0.02 } & \ul{ 0.98}{\tiny$\pm$ 0.02 } \\
\textbf{Diversity}  & 0.52{\tiny$\pm$ 0.05 }          & 0.41{\tiny$\pm$ 0.04 }                                                    & 0.19{\tiny$\pm$ 0.06 } & 0.34{\tiny$\pm$ 0.06 } & 0.41{\tiny$\pm$ 0.07 } & 0.60{\tiny$\pm$ 0.03 }       & 0.61{\tiny$\pm$ 0.03 }       & {\bf 0.70}{\tiny$\pm$ 0.03 } & 0.54{\tiny$\pm$ 0.02 }       & 0.56{\tiny$\pm$ 0.04 }       & 0.59{\tiny$\pm$ 0.05 }       \\
\midrule
\textbf{FID}        & 43.21                           & 28.98                                                                     & 58.39                  & 40.89                  & 40.74                  & 16.69                        & 16.11                        &\textbf{ 13.49  }                      & 17.28                        & 17.41                        & 21.35     \\ \bottomrule
\end{tabular}
}
\vspace{-.2cm}
\caption{
Detailed comparison between attack methods on CelebA.
\vspace{-.2cm}
}
\label{table:celeba-main}
\end{table}

\begin{table}[t]
\resizebox{\columnwidth}{!}{
\setlength{\tabcolsep}{2pt}
\begin{tabular}{l|ccccc||ccccc}
\toprule
\cline{2-11}
\multicolumn{1}{l|}{} & \multicolumn{5}{c|}{\cellcolor[HTML]{EFEFEF}VGG}                                                                                                                                                                 & \multicolumn{5}{c}{\cellcolor[HTML]{EFEFEF}ArcFace-NC}                                                                                                                                                          \\ \cline{2-11} 
\multicolumn{1}{l|}{} & \multicolumn{1}{c|}{\textbf{}}                       & \multicolumn{4}{c|}{\textbf{VMI (ours)}}                                                                                                                  & \multicolumn{1}{c|}{}                                & \multicolumn{4}{c}{\textbf{VMI (ours)}}                                                                                                                  \\ \cline{3-6} \cline{8-11} 
\multicolumn{1}{l|}{} & \multicolumn{1}{c|}{Generative MI}                      & \multicolumn{2}{c|}{\textbf{DCGAN}}                                         & \multicolumn{2}{c|}{\textbf{StyleGAN}}                                      & \multicolumn{1}{c|}{Generative MI}                      & \multicolumn{2}{c|}{\textbf{DCGAN}}                                         & \multicolumn{2}{c}{\textbf{StyleGAN}}                                      \\  \cline{3-6} \cline{8-11} 
\multicolumn{1}{l|}{} & \multicolumn{1}{c|}{\citep{zhang2020secret}} & \multicolumn{1}{c|}{\textbf{Gaussian}} & \multicolumn{1}{c|}{\textbf{Flow}} & \multicolumn{1}{c|}{\textbf{Gaussian}} & \multicolumn{1}{c|}{\textbf{Flow}} & \multicolumn{1}{c|}{\citep{zhang2020secret}} & \multicolumn{1}{c|}{\textbf{Gaussian}} & \multicolumn{1}{c|}{\textbf{Flow}} & \multicolumn{1}{c|}{\textbf{Gaussian}} & \multicolumn{1}{c}{\textbf{Flow}} \\ \cline{3-6} \cline{8-11} 
\midrule
\textbf{Accuracy}   & 0.04{\tiny$\pm$ 0.03 }                               & 0.16{\tiny$\pm$ 0.09 }                 & 0.12{\tiny$\pm$ 0.08 }             & 0.17{\tiny$\pm$ 0.10 }                 & {\bf 0.33}{\tiny$\pm$ 0.18 }       & 0.21{\tiny$\pm$ 0.08 }                               & 0.63{\tiny$\pm$ 0.17 }                 & 0.55{\tiny$\pm$ 0.18 }             & 0.52{\tiny$\pm$ 0.13 }                 & {\bf 0.90}{\tiny$\pm$ 0.07 }       \\
\textbf{Precision}    & 0.51{\tiny$\pm$ 0.09 }                               & 0.60{\tiny$\pm$ 0.10 }                 & \ul{ 0.74}{\tiny$\pm$ 0.07 }       & {\bf 0.80}{\tiny$\pm$ 0.08 }           & \ul{ 0.79}{\tiny$\pm$ 0.11 }       & 0.59{\tiny$\pm$ 0.09 }                               & 0.52{\tiny$\pm$ 0.13 }                 & 0.53{\tiny$\pm$ 0.14 }             & {\bf 0.86}{\tiny$\pm$ 0.04 }           & \ul{ 0.84}{\tiny$\pm$ 0.07 }       \\
\textbf{Density}      & 0.42{\tiny$\pm$ 0.11 }                               & 0.63{\tiny$\pm$ 0.16 }                 & 0.76{\tiny$\pm$ 0.20 }             & {\bf 1.04}{\tiny$\pm$ 0.19 }           & \ul{ 1.02}{\tiny$\pm$ 0.24 }       & 0.46{\tiny$\pm$ 0.13 }                               & 0.55{\tiny$\pm$ 0.18 }                 & 0.64{\tiny$\pm$ 0.21 }             & \ul{ 1.26}{\tiny$\pm$ 0.21 }           & {\bf 1.26}{\tiny$\pm$ 0.23 }       \\
\midrule
\textbf{Recall}       & \ul{ 0.23}{\tiny$\pm$ 0.10 }                         & 0.09{\tiny$\pm$ 0.04 }                 & 0.02{\tiny$\pm$ 0.02 }             & {\bf 0.28}{\tiny$\pm$ 0.06 }           & 0.04{\tiny$\pm$ 0.03 }             & 0.11{\tiny$\pm$ 0.04 }                               & 0.01{\tiny$\pm$ 0.01 }                 & 0.00{\tiny$\pm$ 0.01 }             & {\bf 0.23}{\tiny$\pm$ 0.07 }           & 0.04{\tiny$\pm$ 0.03 }             \\
\textbf{Coverage}     & 0.79{\tiny$\pm$ 0.13 }                               & 0.75{\tiny$\pm$ 0.13 }                 & 0.77{\tiny$\pm$ 0.12 }             & {\bf 0.93}{\tiny$\pm$ 0.08 }           & \ul{ 0.88}{\tiny$\pm$ 0.10 }       & 0.81{\tiny$\pm$ 0.07 }                               & 0.69{\tiny$\pm$ 0.11 }                 & 0.69{\tiny$\pm$ 0.12 }             & {\bf 0.98}{\tiny$\pm$ 0.03 }           & \ul{ 0.96}{\tiny$\pm$ 0.05 }       \\
\textbf{Diversity}    & \ul{ 0.51}{\tiny$\pm$ 0.16 }                         & 0.42{\tiny$\pm$ 0.13 }                 & 0.39{\tiny$\pm$ 0.12 }             & {\bf 0.61}{\tiny$\pm$ 0.10 }           & 0.46{\tiny$\pm$ 0.11 }             & 0.46{\tiny$\pm$ 0.08 }                               & 0.35{\tiny$\pm$ 0.12 }                 & 0.35{\tiny$\pm$ 0.12 }             & {\bf 0.60}{\tiny$\pm$ 0.08 }           & 0.50{\tiny$\pm$ 0.05 }             \\
\midrule
\textbf{FID*}          & 59.27                                                & 53.85                                  & 57.88                              & \textbf{32.51}                                  & 46.13                              & 63.63                                                & 59.23                                  & 63.37                              & \textbf{32.94}                                  & 40.22        \\
\bottomrule
\end{tabular}
}
\vspace{-.2cm}
\caption{Detailed comparison on CelebA for two other target classifiers: VGG, and ArcFace-NC. FID* reported here was computed over 20 identities instead of 100, so the values can not be directly compared with those in Table~\ref{table:celeba-main}.
\vspace{-1em}
}
\label{table:celeba-vgg-and-pretrained}
\end{table}

Our reported results for our baseline~\citep{zhang2020secret} are obtained using our own implementation since the reproduction code was not publicly available.  The discrepancy with the original result can come from a number of factors: dataset split, target classifier checkpoint, and the hyperparameters involved in the method.  We used the task setup described in this paper, and tuned the hyperparameters over a reasonable grid of values.

\vspace{-0.5em}
\paragraph{Attacking Different Target Classifiers.}

After establishing the effectiveness of our VMI attack on different datasets, we conducted experiments to test the effectiveness of our VMI attack against different target classifiers.
We further tested two target classifier on CelebA: a VGG network~\citep{simonyan2014very} trained from scratch on our target dataset, and an adapted ArcFace classifier~\citep{deng2019arcface}, which we denote as ArcFace-NC.
The ArcFace is a specialized face identity recognizer pretrained on a large collection of datasets.  We adapt it by using the pretrained network as a feature encoder,\footnote{Code and model from: \url{https://github.com/TreB1eN/InsightFace_Pytorch}} and adding a nearest cluster classifier in the feature space.  This approach of adapting a powerful feature extractor is popular and effective in few-shot classification~\citep{snell2017prototypical,chen2019closerfewshot}. The resulting accuracy on held-out test examples in the target dataset were 62.1\% (VGG) and 96.1\% (ArcFace-NC).
Results in Table~\ref{table:celeba-vgg-and-pretrained} show that the same observations hold. \paragraph{Accuracy vs. Diversity Tradeoff.}
\begin{wrapfigure}[10]{r}{4cm}
    \vspace{-.5cm}
    \centering
    \includegraphics[width=3.5cm]{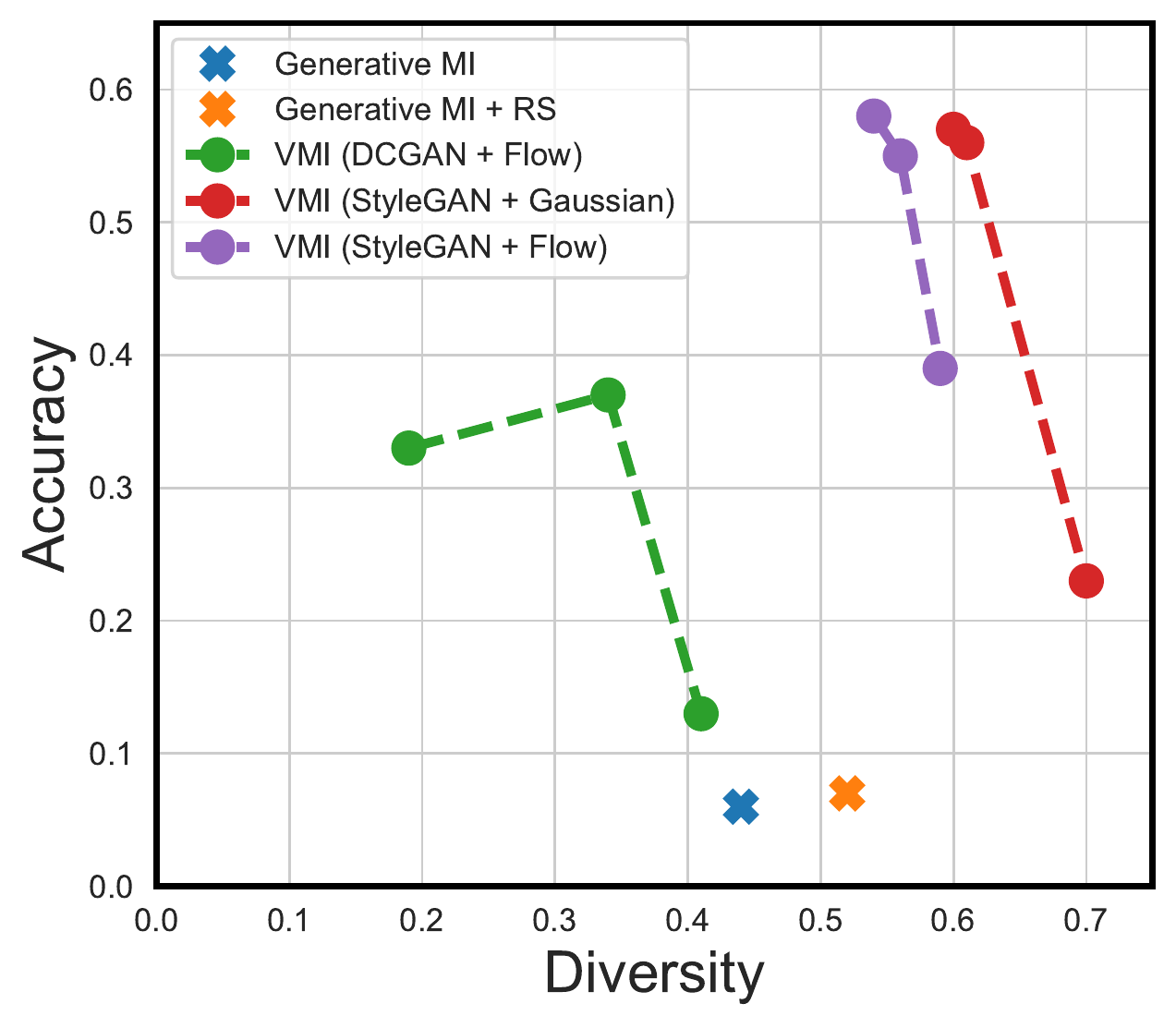}
    \vspace{-.4cm}
    \caption{
    Accuracy and diversity tradeoff on CelebA.
    }
    \label{fig:accuracy-diversity}
\end{wrapfigure}
Figure~\ref{fig:accuracy-diversity} shows the accuracy vs. diversity tradeoff in VMI attacks on CelebA. The VMI attacks that used the StyleGAN generally achieved a better accuracy vs. diversity tradeoff than the ones used the DCGAN. We conjecture this is because of the better layer-wise disentanglement that the StyleGAN achieves.

\begin{figure}[t]
    \centering
    \includegraphics[width=\columnwidth, trim=0 300 0 0, clip]{"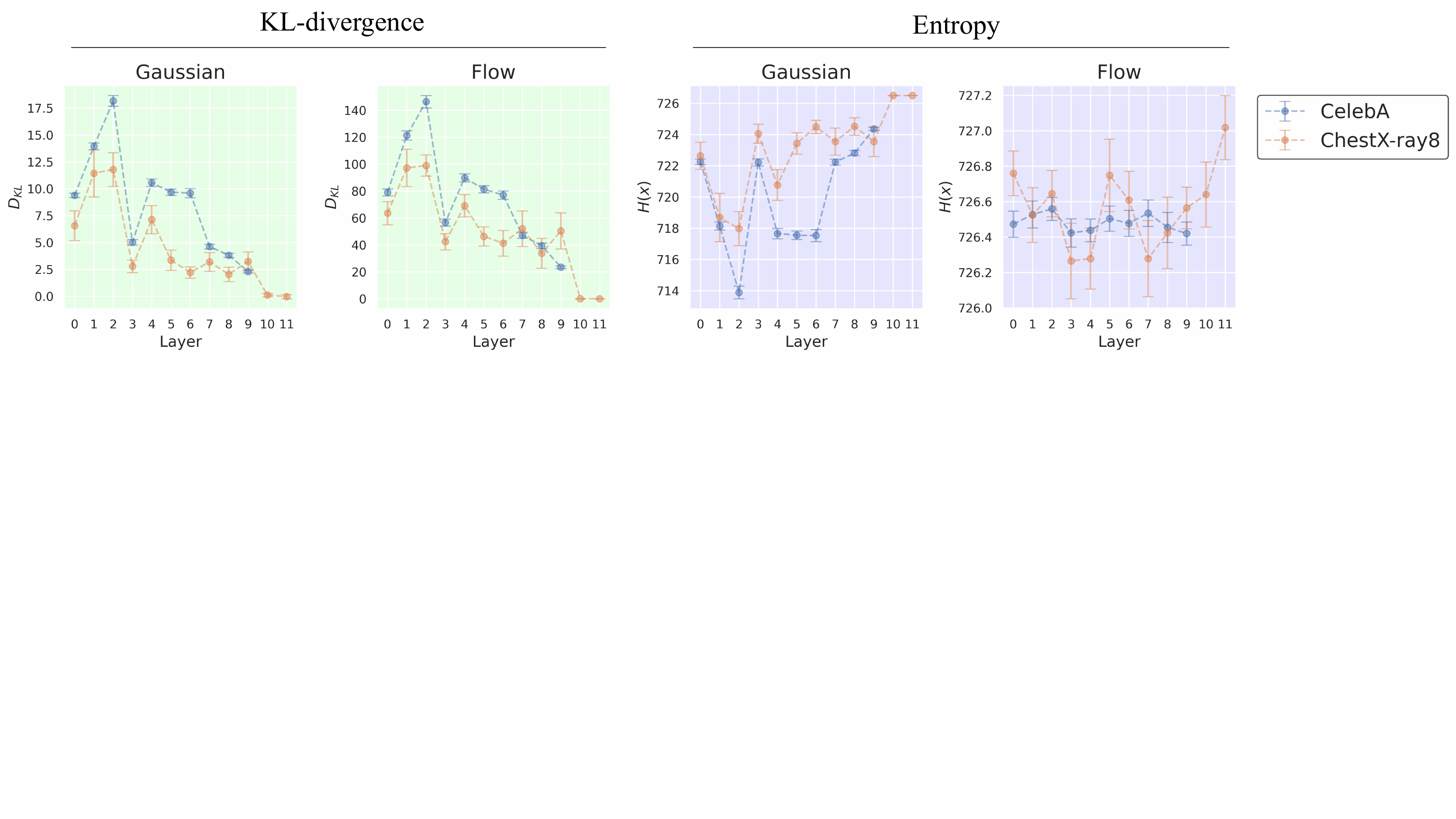"}
    \vspace{-.8cm}
    \caption{KL and entropy of $q_l(\zz)$ after VMI attack on CelebA and ChestX-ray at different layers.}
    \label{fig:stylegan-layers}
\end{figure}

\paragraph{Effect of Individual Layers.}

As motivated in Section~\ref{sec:common-generator}, the extended latent space in Equation~\ref{eqn:jcgs} allowed our attack to modify each layer which controlled different aspects of an image.
Figure~\ref{fig:stylegan-layers} shows values for the KL-divergence and entropy for $q_l(\zz)$ at all layers after VMI optimization in CelebA and ChestX-ray datasets.
We can see a general downward trend in KL divergences and an upward trend in entropies of $q(\zz)$ with respect to layers. This shows that in general the $q(\zz)$ of the earlier layers were modified more by the optimization than the $q(\zz)$ of the later layers. This is consistent with our observation in Figure~\ref{fig:stylegan-illustration} that the earlier layers have more control over features relevant to the identity of a face.

\vspace{-.3cm}
\section{Limitation \& Ethical Concerns}
\label{sec:limitations}
\vspace{-.5em}
While our VMI attack was effective, it relied on the same assumption made in previous works: the attacker had access to a relevant auxiliary dataset~\citep{yang2019adversarial,zhang2020secret}.  A worthwhile future direction is to develop methods that do not depend on this assumption.
Methodologically, our proposed VMI attack used a GAN to narrow down the search space of the variational objective, but GANs might not be equally effective for other data modalities such as text, and tabular data.
Lastly, our study focused on the white-box setting where the attacker had full access to the model; extensions of the current VMI attack to the grey-box or black-box settings are also worthy of further investigation.

In this work, we focused on a method for improving model inversion attacks, which could have negative societal impacts if it falls into the wrong hands.  As discussed in Section~\ref{sec:introduction}, malicious users can use attacks like this to break into secured systems and steal information from private data-centers.  On the other hand, by
pointing out the extent of
the vulnerability of current models, we hope to facilitate research that builds better privacy-preserving algorithms, and raise awareness about privacy concerns.

\section{Conclusion}
An effective model inversion attack method must produce a set of samples that accurately captures the distinguishing features and variations of the underlying identity/class.
Our proposed VMI attack optimizes a variational inference objective by using a combination of a StyleGAN generator and a deep flow model as its variational distribution. Our framework also provides insights into existing attack methods.
Empirically, we found our VMI attacks were effective on three different tasks, and across different target classifiers.
\begin{ack}
\paragraph{Acknowledgement. }
We would like to thank reviewers and ACs for constructive feedback and discussion. Resources used in preparing this research were provided, in part, by the Province of Ontario, the Government of Canada through CIFAR, and companies sponsoring the Vector Institute \url{www.vectorinstitute.ai/\#partners}.

\end{ack}

{
\small
\bibliography{references}
\bibliographystyle{plainnat}
}

\newpage
\begin{appendices}
\section{Proofs} \label{app:proof}

\propkl*
\begin{proof}
\begin{align}
    \KL \big(q(\zz)||\ptar(\zz)\big)
    &= \KL \big(q(\zz)\pg(\xx|\zz)||\ptar(\zz)\pg(\xx|\zz)\big) \\ \nonumber
    &\stackrel{(1)}{\geq} \KL\big(\E_{q(\zz)}[\pg(\xx|\zz)]||\E_{\ptar(\zz)}[\pg(\xx|\zz)]\big),
\end{align}
where $(1)$ uses the fact that for any two arbitrary joint distributions $p(\xx, \zz)$ and $q(\xx, \zz)$, we have
\begin{align*}
    \KL(q(\xx, \zz) || p(\xx, \zz))
    &= \E_{q(\xx)}[\KL(q(\zz|\xx)||p(\zz|\xx))] + \KL(q(\xx)||p(\xx))\\
    &\geq \KL(q(\xx)||p(\xx)).
\end{align*}
\end{proof}

\begin{proposition} The power posterior $q_\gamma^*(\zz) \propto \paux(\zz)\bptar^{\frac1\gamma}(y|G(\zz))$ is the solution of the following optimization problem:
\begin{align}
q_\gamma^*(\zz) &= \argmin_{q(\zz)} \Lvmi(q), \\
\Lvmi(q) &:=  \E_{\zz \sim q(\zz)} [-\log \bptar(y|G(\zz))] + \gamma \KL(q(\zz) || \paux(\zz)).
\end{align}
\end{proposition}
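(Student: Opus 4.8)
The plan is to recast $\Lvmi(q)$ so that, up to a positive multiplicative constant $\gamma>0$ and an additive constant that does not depend on $q$, it becomes a single Kullback--Leibler divergence between $q$ and the claimed minimizer $q_\gamma^*$. Once the objective is in that form, the characterization follows at once from Gibbs' inequality: the KL divergence is nonnegative and vanishes if and only if its two arguments coincide. I assume throughout that $\gamma>0$ (the degenerate cases $\gamma\to 0$ and $\gamma\to\infty$ discussed in the main text are limits for which $q_\gamma^*$ must be interpreted separately).

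Concretely, I would first introduce the normalizing constant $Z \coloneqq \int \paux(\zz)\,\bptar^{\frac1\gamma}(y|G(\zz))\,d\zz$, so that the proposed power posterior is the genuine density $q_\gamma^*(\zz) = \tfrac1Z\paux(\zz)\,\bptar^{\frac1\gamma}(y|G(\zz))$. Dividing the objective by $\gamma$ and folding the likelihood term into the log-ratio defining the KL gives
\begin{align*}
\frac1\gamma \Lvmi(q)
&= \E_{q(\zz)}\Big[\log\frac{q(\zz)}{\paux(\zz)\,\bptar^{\frac1\gamma}(y|G(\zz))}\Big] \\
&= \E_{q(\zz)}\Big[\log\frac{q(\zz)}{Z\,q_\gamma^*(\zz)}\Big]
= \KL\big(q(\zz)\,\|\,q_\gamma^*(\zz)\big) - \log Z.
\end{align*}
Because $\log Z$ is independent of $q$, minimizing $\Lvmi(q)$ over $q$ is equivalent to minimizing $\KL\big(q\,\|\,q_\gamma^*\big)$, and by Gibbs' inequality this is achieved uniquely at $q=q_\gamma^*$.

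The entire content of the argument is the one-line rearrangement into a single KL term, after which the variational characterization is completely standard; I therefore do not anticipate a substantive obstacle. The only point meriting care is verifying that $q_\gamma^*$ is a well-defined probability density, i.e.\ that $0<Z<\infty$. Finiteness is immediate since $\bptar(y|G(\zz))\in[0,1]$ implies $\bptar^{\frac1\gamma}(y|G(\zz))\le 1$ for $\gamma>0$, so $Z\le\int\paux(\zz)\,d\zz=1$; positivity holds whenever the likelihood $\bptar(y|G(\zz))$ is positive on a set of positive $\paux$-measure, which is the only regularity condition I would flag.
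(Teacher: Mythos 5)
Your proof is correct and follows essentially the same route as the paper's: both rewrite $\frac1\gamma\Lvmi(q)$ as $\KL(q\,\|\,q_\gamma^*)$ plus a constant in $q$ (the log-partition function) and conclude by nonnegativity of the KL divergence. Your added remark verifying $0<Z<\infty$ is a small but worthwhile point of rigor that the paper omits.
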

\begin{proof} Suppose $\mathcal{Z}_\gamma$ is the partition function of $q_\gamma^*(\zz) = \frac{1}{\mathcal{Z}_\gamma} \paux(\zz)\bptar^{\frac1\gamma}(y|G(\zz))$. The proof follows from the following equality and the fact that $\mathcal{Z}_\gamma$ is independent of $q(\zz)$.
\small
\begin{align}
    \KL(q(\zz)||\frac{1}{\mathcal{Z}_\gamma}\paux(\zz)\bptar^{\frac1\gamma}(y|G(\zz))) =\frac1\gamma  \E_{q(\zz)}{[-\log \bptar(y|\xx)]} + \KL(q(\zz)||\paux(\zz)) + \log(\mathcal{Z}_\gamma). \nonumber
\end{align}
\end{proof}
\normalsize

\section{Experimental Details}
\label{app:experimental}
All experiments are run on Nvidia GPUs.
More details can be found in the open-source code.
\subsection{Datasets}
\label{app:experimental:data}
For the MNIST task.  The `letter' split of the EMNIST dataset was used as the auxiliary dataset. The images are resized to 32x32.
For the CelebA task, we split the full CelebA dataset into 2 sets:
\begin{itemize}[topsep=0pt,itemsep=-1ex,partopsep=1ex,parsep=1ex]
    \item a private/target set that contains the most frequent $1000$ identities, and
    \item a public/auxiliary set consisting of the rest $9177 = 10,177 - 1000$ identities.
\end{itemize}
We take the 128x128 center crop of the original images, and resized them to 64x64.
For the private dataset, 5 examples were used as unseen test examples to evaluate the classifier accuracy.
For the ChestX-ray8 task, the 8 diseases used in the original study~\citep{wang2017chestx} were used as the target dataset. Here are short descriptions for each of the diseases:
\begin{enumerate}
    \item Atelectasis: ``partial collapse of lung(s)''
    \item Cardiomegaly: ``enlarged heart''
    \item Effusion: ``accumulation of fluids ‘around’ the lungs''
    \item Infiltration: ``accumulation of fluids `in' the lungs''
    \item Mass: ``extra soft tissue''
    \item Nodule: ``small round mass''
    \item Pneumonia: ``infection/inflammation that fills the lungs with fluids or pus''
    \item Pneumothorax: ``complete collapse of lung(s)''
\end{enumerate}
A majority of images in the auxiliary set are from the ``normal/healthy'' population.
In order to preserve the details of the original images, images in this task are resized to 128x128.

\subsection{Target Classifiers}
For all the target classifiers, grid search over hyperparameters were done to maximize their accuracy on a validation set. Below we provide the details for the selected hyperparameters used for the MI attack experiments.

\paragraph{MNIST.}
The target classifier for MNIST was a ResNet10.  It was trained using Adadelta (learning rate=1e-1, batch size=32) for 13 epochs.  Learning rate decayed by a factor of 0.7 at every epoch. The best validation accuracy for the 10-way classification problem was 98.1\%.

\paragraph{CelebA.}
The target classifier for CelebA was a ResNet34.  It was trained using SGD with Nesterov momentum (learning rate=1e-1, batch size=64, momentum=0.9, weight decay=5e-4) for 200 epochs.  Learning rate decayed by a factor of 0.2 at 60, 120 and 160 epochs.  CutOut~\citep{devries2017improved} was used as data augmentation. The best validation accuracy for the 1000-way classification problem was 69.0\%.

\paragraph{ChestX-ray8.}
The target classifier for ChestX-ray8 was a ResNet34.  It was trained using SGD with Nesterov momentum (learning rate=1e-1, batch size=64, momentum=0.9, weight decay=5e-4) for 200 epochs.  Learning rate decayed by a factor of 0.2 at 60, 120 and 160 epochs.  Translation was used as data augmentation. The best validation accuracy for the 8-way classification problem was 45.3\%.

\subsection{Evaluation Classifiers}
\label{app:experimental:eval}
\paragraph{MNIST.}
For MNIST, the evaluation classifier had the same model structure and hyperparameters as the target classifier, but was trained with a different random seed.

\paragraph{CelebA.}
For CelebA, we started with a pretrained checkpoint from a large scale facial recognition task~\footnote{the IR-SE50 checkpoint from \url{https://github.com/TreB1eN/InsightFace_Pytorch}.}, and further finetuned it on our private training set after replacing the final classification layer with a randomly initialized linear layer.
The final accuracy of our evaluation classifier on the unseen set was 97\%.

\paragraph{ChestX-ray8.}
For ChestX-ray, we followed the recommendation from the original paper~\citep{wang2017chestx} and started with a ResNet50 pretrained on ImageNet, and finetuned on the target dataset. The final accuracy on the unseen test set was 50.3\%.

\subsection{Flow Details}
In our experiments, we use the Glow model from \citet{kingma2018glow} as our variational distribution $q(z)$.  As discussed in Section~\ref{sec:method:q}, we treat the latent vectors as 1x1 images, and remove the squeezing layers that were designed to reduce image sizes.  The other hyperparameters can be found in the following table:

\begin{table}[h]
\begin{tabular}{l|l}
\toprule
Hyperparameter                & Value                 \\
\midrule
Flow Permutation              & Random Shuffle        \\
Flow Coupling Type            & Additive              \\
\# of Total Invertible Blocks & 30                    \\
\# of Conv Layers per Block   & 3                     \\
\# of Channels per Conv Layer & 100                   \\
Activation Function           & ELU       \\
\bottomrule
\end{tabular}
\end{table}

\section{Additional Results}\label{app:experimental:results}
Detailed results including all metrics for MNIST and ChestX-ray are shown in Table~\ref{table:mnist-main}, and Table~\ref{table:chestxray-main} respectively.  The attack samples for ChestX-ray are in Figure~\ref{fig:chestxray-samples}.

\begin{table}[H]
\small
\begin{tabular}{l|cccc}
\toprule
\cline{4-5}
                    &                                                                                 & \multicolumn{1}{l|}{}                                & \multicolumn{2}{c}{\textbf{VMI (ours)}}                                    \\ \cline{4-5}
                    & General MI                                        & \multicolumn{1}{c|}{Generative MI}                      & \multicolumn{2}{c}{\textbf{DCGAN}}                                         \\ \cline{4-5}
                    & \citep{hidano2017model}  & \multicolumn{1}{c|}{\citep{zhang2020secret}} & \multicolumn{1}{c|}{\textbf{Gaussian}} & \multicolumn{1}{c}{\textbf{Flow}} \\\cline{4-5}
                    \midrule
\textbf{Accuracy} & 0.00{\tiny$\pm$ 0.00 }                                   &  0.92{\tiny$\pm$ 0.02 }                        & 0.93{\tiny$\pm$ 0.06 }                & \textbf{0.95}{\tiny$\pm$ 0.02 }            \\
\textbf{Precision}  & 0.00{\tiny$\pm$ 0.00 }                                 & \ul{0.25}{\tiny$\pm$ 0.14 }                              & \ul{0.26}{\tiny$\pm$ 0.13 }                & \textbf{ 0.35}{\tiny$\pm$ 0.15 }      \\
\textbf{Density}    & 0.00{\tiny$\pm$ 0.00 }                               & \ul{ 0.09}{\tiny$\pm$ 0.07 }                        & \ul{ 0.11}{\tiny$\pm$ 0.06 }          & {\bf 0.14}{\tiny$\pm$ 0.09 }      \\
\midrule
\textbf{Recall}     & 0.00{\tiny$\pm$ 0.00 }                       & 0.39{\tiny$\pm$ 0.12 }                              & {\bf 0.54}{\tiny$\pm$ 0.12 }          & 0.25{\tiny$\pm$ 0.10 }            \\
\textbf{Coverage}   & 0.00{\tiny$\pm$ 0.00 }                                & \ul{ 0.20}{\tiny$\pm$ 0.12 }                        & \ul{ 0.17}{\tiny$\pm$ 0.08 }          & {\bf 0.24}{\tiny$\pm$ 0.12 }      \\
\textbf{Diversity}  & 0.00{\tiny$\pm$ 0.00 }                         & \ul{ 0.29}{\tiny$\pm$ 0.17 }                        & {\bf 0.36}{\tiny$\pm$ 0.15 }          & \ul{ 0.24}{\tiny$\pm$ 0.16 }      \\
\midrule
\textbf{FID}        & 376.7                                                               & 88.91                                               & 82.52                                 & \textbf{77.73 }                         \\
\bottomrule
\end{tabular}
\caption{MNIST: comparing baseline and our attacks.}
\label{table:mnist-main}
\end{table}
\begin{table}[H]
\small
\setlength{\tabcolsep}{2pt}
\begin{tabular}{l|cccccc}
\toprule
\cline{4-7}
\textbf{}           &                                                  & \multicolumn{1}{l|}{}                                & \multicolumn{4}{c}{\textbf{VMI (ours)}}                                    \\ \cline{4-7}
                    & General MI                                        & \multicolumn{1}{c|}{Generative MI}                      & \multicolumn{2}{c|}{\textbf{DCGAN}}                                         & \multicolumn{2}{c}{\textbf{StyleGAN}}                                      \\ \cline{4-7}
                    &\citep{hidano2017model} & \multicolumn{1}{c|}{\citep{zhang2020secret}} & \multicolumn{1}{c|}{\textbf{Gaussian}} & \multicolumn{1}{c|}{\textbf{Flow}} & \multicolumn{1}{c|}{\textbf{Gaussian}} & \multicolumn{1}{c}{\textbf{Flow}} \\ \cline{4-7}
                    \midrule
\textbf{Accuracy} & 0.23{\tiny$\pm$ 0.29 }     & 0.28{\tiny$\pm$ 0.24 }                                                           & 0.36{\tiny$\pm$ 0.25 }                 & \ul{ 0.42}{\tiny$\pm$ 0.28 }       & \ul{ 0.54}{\tiny$\pm$ 0.24 }           & {\bf 0.69}{\tiny$\pm$ 0.23 }       \\
\textbf{Precision}  & 0.00{\tiny$\pm$ 0.00 }                                 & 0.15{\tiny$\pm$ 0.09 }                               & 0.20{\tiny$\pm$ 0.05 }                 & 0.08{\tiny$\pm$ 0.13 }             & {\bf 0.30}{\tiny$\pm$ 0.09 }           & 0.15{\tiny$\pm$ 0.12 }             \\
\textbf{Density}    & 0.00{\tiny$\pm$ 0.00 }                                & 0.06{\tiny$\pm$ 0.03 }                               & 0.08{\tiny$\pm$ 0.03 }                 & 0.02{\tiny$\pm$ 0.03 }             & {\bf 0.18}{\tiny$\pm$ 0.06 }           & 0.08{\tiny$\pm$ 0.06 }             \\
\midrule
\textbf{Recall}     & 0.00{\tiny$\pm$ 0.00 }                              & 0.04{\tiny$\pm$ 0.04 }                               & 0.07{\tiny$\pm$ 0.06 }                 & 0.00{\tiny$\pm$ 0.00 }             & {\bf 0.32}{\tiny$\pm$ 0.10 }           & 0.05{\tiny$\pm$ 0.04 }             \\
\textbf{Coverage}   & 0.00{\tiny$\pm$ 0.00 }                          & 0.14{\tiny$\pm$ 0.07 }                               & 0.17{\tiny$\pm$ 0.04 }                 & 0.00{\tiny$\pm$ 0.01 }             & {\bf 0.43}{\tiny$\pm$ 0.09 }           & 0.12{\tiny$\pm$ 0.08 }             \\
\textbf{Diversity}  & 0.00{\tiny$\pm$ 0.00 }                            & 0.09{\tiny$\pm$ 0.08 }                               & 0.12{\tiny$\pm$ 0.07 }                 & 0.00{\tiny$\pm$ 0.01 }             & {\bf 0.38}{\tiny$\pm$ 0.13 }           & 0.09{\tiny$\pm$ 0.09 }             \\
\midrule
\textbf{FID}        & 499.54                                                         & 142.66                                               & 104.23                                 & 265.14                             & \textbf{63.78 }                                 & 123.17                     \\
\bottomrule
\end{tabular}
\caption{ChestX-ray8: comparing baseline and our attacks. }
\label{table:chestxray-main}
\end{table}
\begin{figure}[H]
    \centering
    \includegraphics[width=\linewidth, trim=0 220 80 0, clip]{"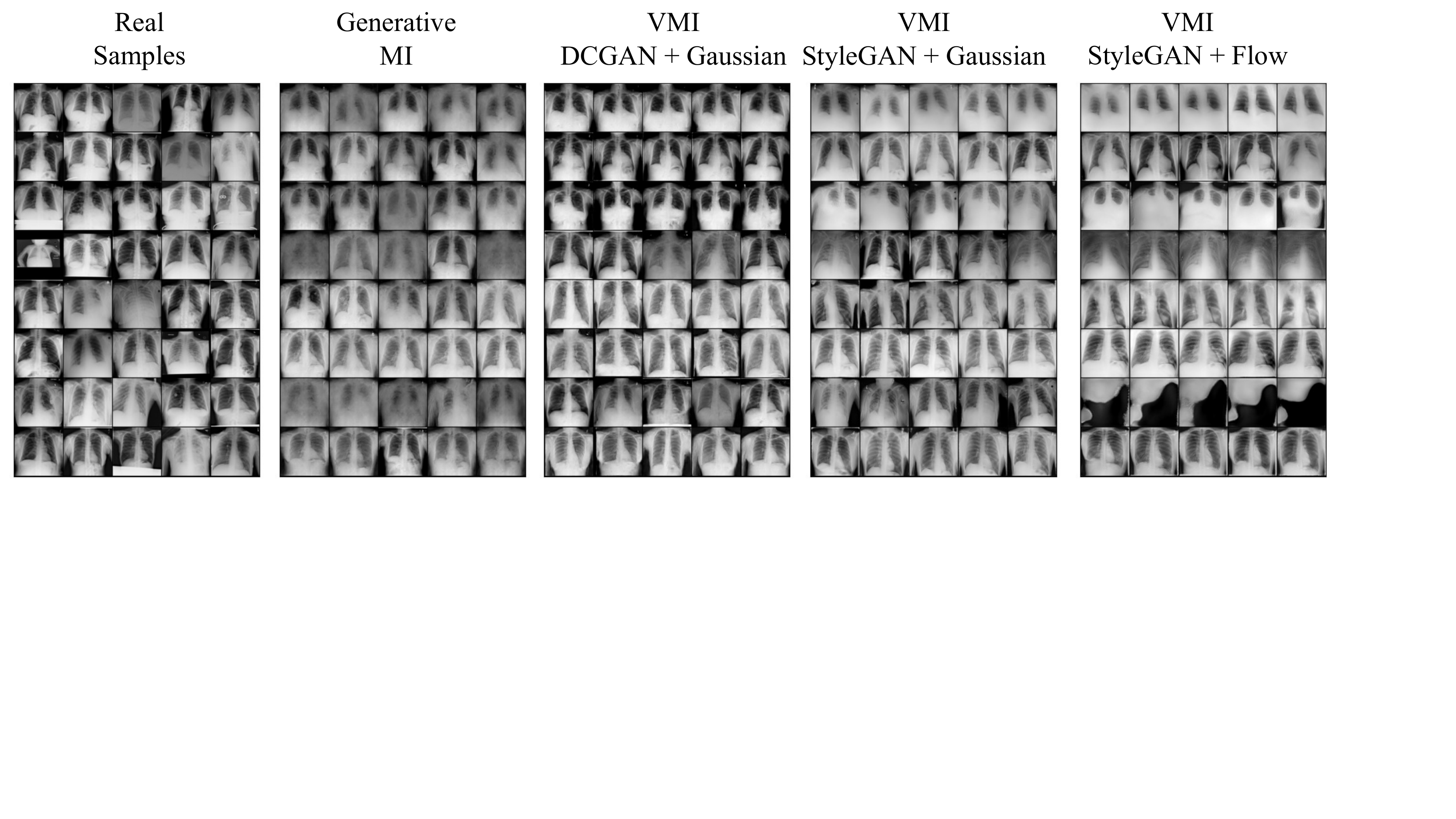"}
    \caption{MI attack samples on ChestX-ray. Each row corresponds to a different disease. Best viewed zoomed in.}
    \label{fig:chestxray-samples}
\end{figure}
\end{appendices}
\end{document}